\newtheorem{theorem}{Theorem}
\newtheorem{lemma}{Lemma}
\newtheorem{assumption}{Assumption}
\newtheorem{corollary}{Corollary}
\begin{document}
\title{EB-MBD: Emerging-Barrier Model-Based Diffusion for\\ Safe Trajectory Optimization in Highly Constrained Environments}

\author{\iftoggle{anon}{Anonymous}{
Raghav Mishra and Ian R. Manchester 
\thanks{This research was funded by the Australian Research Council through the ARC Research Hub in Intelligent Robotic Systems for Real-Time Asset Management (IH210100030). All authors are with the Australian Robotic Inspection and Asset Management (ARIAM) Hub, the Australian Centre for Robotics, and the School of Aerospace, Mechanical and Mechatronic Engineering, University of Sydney}
}
}

\markboth{}{}


\maketitle

\begin{abstract}
 We propose enforcing constraints on Model-Based Diffusion by introducing emerging barrier functions inspired by interior point methods. We demonstrate that the standard Model-Based Diffusion algorithm can lead to catastrophic performance degradation in highly constrained environments, even on simple 2D systems due to sample inefficiency in the Monte Carlo approximation of the score function. We introduce Emerging-Barrier Model-Based Diffusion (EB-MBD) which uses progressively introduced barrier constraints to avoid these problems, significantly improving solution quality, without expensive projection operations such as projections. We analyze the sampling liveliness of samples at each iteration to inform barrier parameter scheduling choice. We demonstrate results for 2D collision avoidance and a 3D underwater manipulator system and show that our method achieves lower cost solutions than Model-Based Diffusion, and requires orders of magnitude less computation time than projection based methods. 
\end{abstract}



\section{Introduction}
\label{sec:intro}
Dynamic motion planning problems are often formulated as constrained trajectory optimization problems of the form
\begin{gather*}
    \min_{\xi_{0:T+1}, u_{0:T}} \left\{ J(\xi_{0:T}, u_{0:T}) = \ell_T(\xi_{T+1}) + \sum_{t=0}^{K} \ell(\xi_t, u_t)\right\} \\
    \text{s.t. } \xi_{t+1} = f(\xi_t, u_t),\quad  g(\xi_{0:T}, u_{0:T}) \geq 0.
\end{gather*}
These problems can be solved online using a numerical non-linear programming (NLP) solver, which performs optimization on an initial guess until convergence to locally optimal solutions, often using first or second order numerical algorithms such as Newton-Raphson, BFGS, etc.

While trajectory optimization has been successful for many systems, it struggles to optimize in the non-convex, non-smooth landscapes introduced by systems such as manipulators and legged robots\cite{suh_bundled_2022}. Feasible and performant trajectory optimization in these domains relies on hand-crafted initialization and contact schedules. Additionally, not all dynamic models or simulators are differentiable and thus can not provide derivative information for optimization. Recently, the robotics community has embraced learning-based methods such as imitation learning and reinforcement learning due to their empirically strong performance for problems on which trajectory optimization struggles\cite{chi_diffusion_2023}.

Diffusion models, which have gained popularity for their ability to generate realistic images\cite{ho_denoising_2020, sohl-dickstein_deep_2015}, have demonstrated impressive performance in representing learned control policies~\cite{chi_diffusion_2023, janner_planning_2022}. However, the model-free nature of standard diffusion models discounts the knowledge we have about system dynamics and task objectives, which has led to the development of diffusion algorithms that incorporate model knowledge for trajectory optimization~\cite{pan_model-based_2024, carvalho_motion_2024, xue_full-order_2024}. These approaches can augment learning-based diffusion with model knowledge, or even run diffusion without any learning.

Compared to optimization approaches, these approaches provide multi-modal trajectory sampling, less susceptibility to local minima, the ability to incorporate learning~\cite{pan_model-based_2024}, and optimize through non-smooth dynamics such as contact~\cite{suh_bundled_2022}. Compared to learning-only diffusion models, they can incorporate knowledge of dynamics and objectives, benefiting generalization in new contexts, and offer flexible inference without retraining or gathering new data. Model-based diffusion (MBD) \cite{pan_model-based_2024} uses Monte Carlo approximations of the Stein score function to run a learning-free reverse diffusion process, providing a zeroth-order sampling-based alternative to traditional gradient-based optimization.

\begin{figure}[t]
    \centering
    \includegraphics[width=0.95\linewidth]{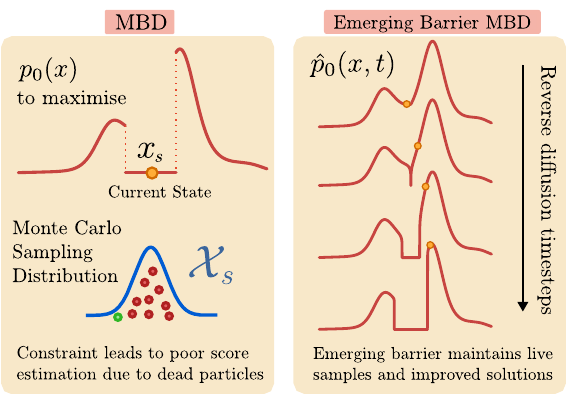}
    \caption{Our proposed method improves the performance of Model Based Diffusion by augmenting the underlying target distribution over the reverse process with a log barrier cost}
    \label{fig:main_figure}
\end{figure}

Motion planning problems often have constraints such as collision avoidance or dynamic safety, which gradient-based methods have mature ways to incorporate. In the MBD framework, infeasible solutions result in zero-probability regions on the support of the distribution. Although various methods for enforcing constraints on learning-based diffusion have been proposed\cite{khalafi_constrained_2024, fishman_diffusion_2024, romer_diffusion_2024, christopher_constrained_2024}, they do not translate to MBD due to its reliance on Monte Carlo sampling which is also affected by the constraints. When infeasible solutions cover a large part of the solution space, the score estimate's value is dominated by rare events on the Monte Carlo proposal distribution, degrading performance.

The contributions of this paper are:
\begin{itemize}
    \item \textbf{We show that MBD can suffer catastrophic performance degradation} in highly constrained state spaces due to poor score estimation.
    \item \textbf{We propose Emerging Barrier MBD}, which augments MBD with a time-varying barrier for constraint guidance towards higher-quality and guaranteed feasibility.
    \item \textbf{We analyse the sampling statistics of the EB-MBD process} and study the design trade-offs of the barrier hyperparameters.
    \item \textbf{We compare EB-MBD to MBD and projection-based constrained diffusion methods} in simulation experiments and show that it provides better performance at a fraction of the computational cost.
\end{itemize}

\section{Related Work}
\label{sec:related}
\subsection{Diffusion Models}
Diffusion Models \cite{sohl-dickstein_deep_2015, ho_denoising_2020} simulate a noising diffusion process until a time where the distribution is stationary and easy to sample from, and train a neural network to learn the score function (or a discretized equivalent) of the process. This allows generative modelling by sampling from the simpler stationary distribution and running the reverse processes to obtain samples from the target distribution. 

Diffusion models have shown great capability in sampling from complex modalities and distributions such as images, videos, and text, but have been used in robotics for representing policies and planners via imitation learning \cite{chi_diffusion_2023} and reinforcement learning. Compared to other types of policy representations, diffusion models excel at representing multi-modal distributions~\cite{janner_planning_2022, chi_diffusion_2023}. For robotics, they also enable simple conditional guidance mechanisms to allow task-specific sampling.

While various attempts have been made to run diffusion models with constraints, they usually involve modifying the training process \cite{khalafi_constrained_2024,fishman_diffusion_2024}. \cite{fishman_diffusion_2024} uses log barriers for constrained diffusion to modify the diffusion step size through the Riemannian metric induced by the barrier's Hessian. However, this method is limited to convex constraints.  Alternatively, they may try to enforce constraints on a pre-trained model during the denoising steps \cite{christopher_constrained_2024, xiao_safediffuser_2023, romer_diffusion_2024}. Many of these approaches only focus on learned diffusion models and do not translate to Monte Carlo score estimation which brings additional challenges due to how they are affected by constraints. Additionally, many use computationally expensive operations like projections onto non-convex constraint sets \cite{romer_diffusion_2024, christopher_constrained_2024} or optimization problems \cite{xiao_safediffuser_2023} which are passed to numerical NLP solvers whose solution quality and computation time can depend heavily on the initial guess and cost functions.   
\subsection{Trajectory optimization}
Trajectory optimization poses motion planning problems as non-linear programming problems. Trajectory optimization uses the rich theory of numerical optimization with the ability to specify and enforce constraints through various methods such as penalty methods, barrier methods, augmented Lagrangian methods, projected gradient descent, etc. Most relevant to this work, interior point methods use log barriers that enforce infinite cost at the boundary of the constraint and involve repeatedly solving the barrier-augmented optimization problem but with reducing the weight or ``hardness" of the barrier until convergence to the true solution. 

However, gradient-based optimization is difficult to parallelize, cannot deal with non-smooth objectives, and suffers from local minima. Gradient-free trajectory optimization has been studied to address some problems with gradient-based optimization. Sampling-based optimal control algorithms such as STOMP \cite{kalakrishnan_stomp_2011}, MPPI \cite{williams_aggressive_2016} and its variants \cite{lefebvre_path_2019, theodorou_iterative_2011, theodorou_reinforcement_2010} have shown strong ability to deal with complex non-linear systems. 

In recent literature, many works have started to augment diffusion algorithms with ideas from trajectory optimization. Various works have implemented this through guidance from model-based cost functions\cite{carvalho_motion_2024}, or used Monte Carlo sampling to run diffusion algorithms \cite{pan_model-based_2024, xue_full-order_2024} in a similar vein to sampling-based trajectory optimization.

\section{Technical Background And Motivation}
\subsection{Diffusion Models}
Diffusion models approach the problem of sampling from a target distribution with density $p(x)$ by setting it as $p_0(x)$, the initial distribution at time $t=0$, for a diffusion stochastic process which adds noise to the target distribution until it approaches a stationary distribution that is easy to sample from. By sampling from this simple distribution and running the reverse process, we can recover samples from the original target distribution. For example, Denoising Diffusion Probabilistic Models (DDPM)\cite{ho_denoising_2020} has the following forward process which adds corrupting noise to samples from the target distribution during training
\begin{equation}    
    x_{s+1} = \sqrt{1 - \beta_s} \, x_s + \beta_s z_s\label{eq:ddpm_forward},
\end{equation}
and runs the reverse process to produce samples from the target distribution during inference
\begin{equation}
    x_{s-1} = \frac{1}{\sqrt{\alpha_s}}\left[x_s + \frac{1-\alpha_s}{\sqrt{1-\bar{\alpha}_s}}\epsilon_{\theta}(x_s,s) \right] + \varsigma_s z_s \label{eq:ddpm},
\end{equation}
where $\beta$, $\alpha$, $\bar \alpha$ and $\varsigma$ are parameters that depend on the noise schedule, which is a hyperparameter of the algorithm, and $z_t$ is drawn from a standard Gaussian. The term $\epsilon_\theta$ represents the mean of a denoising Gaussian term added in the reverse process, parametrized by a neural network with parameters $\theta$. In a ``score-based" framework, this discretized reverse process can also be written as
\begin{equation}
    x_{s-1} = \frac{1}{\sqrt{\alpha}_s}\left[x_s + (1-\bar{\alpha}_s) \nabla \log p_s(x_s) \right] + \varsigma_s z_s \label{eq:ddpm2},
\end{equation}
where the $\nabla \log p_s (x) $ is known as the Stein score and is generally approximated by a neural network. The forward process has a standard Gaussian distribution $\mathcal{N}(0, I_N)$ as its stationary distribution, which can be tractably sampled. In general, neither the target distribution nor the score function is available for a given problem. Therefore, the score function (or denoising mean) is learned by taking available samples from the target  distribution, running the forward process, and training a neural network to minimize an evidence lower bound or score matching loss. 

\subsection{Model-based Diffusion}
Given an optimization problem to minimize $J(x)$ over decision variable $x$, we can construct a Boltzmann-Gibbs distribution,
\begin{equation}    
        p(x) \propto \exp\left(-\frac{1}{\lambda}J(x)\right) \label{eq:boltzmann},
\end{equation}
where the density $p(x)$ is higher where the cost is lower and temperature, $\lambda$, is a constant parameter, decreasing which concentrates the mass closer to the minima of the function. This turns optimization of $J(x)$ into sampling from unnormalized densities.

Model Based Diffusion performs sampling by running the underlying process behind the DDPM algorithm, which requires having access to the score function, $\nabla  \log p_s(x_s)$. In the learning-free setting, since we do not directly have access to the score function but do have access to an unnormalized $p_0(x)$, MBD\cite{pan_model-based_2024} uses Monte Carlo sampling to estimate the score
\begin{equation}
    \nabla \log p_s(x_s) \approx - \frac{x_s}{1-\bar \alpha} + \frac{\sqrt{\bar\alpha}}{1-\bar{\alpha}} \left(\frac{\sum_{\hat x_i\in\mathcal{X}_s}\hat x_i\, p_0(\hat x_i)}{\sum_{\hat x_i\in\mathcal{X}_s}p_0(\hat x_i)} \right), \label{eq:mcscore}
\end{equation}
where $\mathcal{X}_s=\{\hat x_i\}_0^N$ and $\hat x_i$ are realizations drawn from $X_s\sim\mathcal{N}(\frac{x_s}{\sqrt{\bar \alpha_{s-1}}}, \sigma_s^2 I)$ and $\sigma_s^2 = \frac{1}{\sqrt{\bar \alpha_{s-1}}}-1$. This is similar to zeroth order optimization methods such as the cross-entropy method (CEM)\cite{rubinstein_cross-entropy_1999}, random search\cite{mania_simple_2018}, and model predictive path integral control\cite{williams_aggressive_2016}. For a dynamic system, evaluating $p_0(\tau)$ for the trajectory, $\tau$, the trajectory can be parametrized by the actions $u_{0:T}$ and rolled out to calculate $J(\tau)$.

However, this Monte Carlo approach poses problems in a constrained environment where a large part of the solution space violates constraints. We demonstrate that its performance catastrophically degrades as constraints become restrictive even on simple 2D systems.  If the problem is heavily constrained, the target density $p(x)=0$ in much of the solution space and samples $\hat x_i$ are likely to have no contribution, leading to ``dead" samples with no information. Figure \ref{fig:trajectory_comparison} shows how increasing obstacle radius in an existing obstacle avoidance problem causes failure, which MBD otherwise performs well on.

\begin{figure}[t]
    \centering
    \subfloat{
        \includegraphics[width=0.2\textwidth]{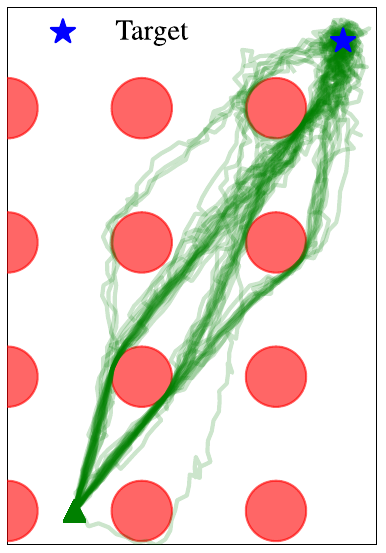}
        \label{fig:low_constrained}
    }
    \subfloat{
        \includegraphics[width=0.2\textwidth]{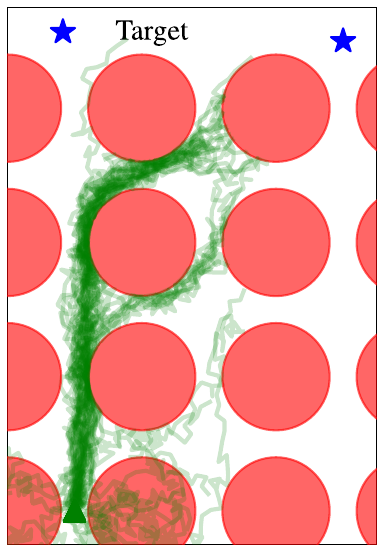}
        \label{fig:high_constrained}
    }
    \caption{Increasing obstacle size in a 2D obstacle avoidance problem leads to catastrophic degradation in performance for MBD as the sampling efficiency of the score estimate suffers}
    \label{fig:trajectory_comparison}
\end{figure}

\section{Emerging Barrier Model-Based Diffusion}
Our motivating context is robotic motion planning where it is common to encounter highly constrained optimization problems. However, our contribution is more generally applicable as an optimization-by-sampling algorithm.
\subsection{Problem Statement}
We aim to solve the constrained optimization problem 
\begin{equation}
    \min_x J(x) \quad 
    \text{ s.t. } \quad g(x) \geq 0 .
\end{equation}
We approximate this problem to one of sampling from distribution
\begin{align*}
    p(x) &\propto \exp\left(-\frac{1}{\lambda} J(x)\right) \mathds{1}_{g}(x),
\end{align*}
where $\mathds{1}_g(x)$ is an indicator function that is zero on the constraint violating set and one on the feasible set. Sampling from $p(x)$ provides high quality approximate solutions to the original problem. We perform sampling by setting the target distribution at $s=0$ as $p_0(x) :=p(x)$, and running a DDPM process as in equation \eqref{eq:ddpm2} using a score approximation as in equation \eqref{eq:mcscore}. MBD\cite{pan_model-based_2024} uses a non-stochastic form of DDPM where $\varsigma_s=0$ and the only stochasticity comes from Monte Carlo sampling and we follow that convention.

\subsection{Motion Planning}
To apply our method to constrained motion planning, we work with the discrete time non-linear systems of the form
\begin{equation*}
\xi_{t+1} = f(\xi_t, u_t),
\end{equation*}
where $\xi \in \mathbb{R}^n$, $u\in \mathbb{R}^m$ and $f(\cdot) : \mathbb{R}^n \times \mathbb{R}^m\rightarrow \mathbb{R}^n$ represent general nonlinear dynamics. 

We parameterize a trajectory, $\tau$, of the system via the actions taken, $\tau = \{u_{0:T}\} \in \mathbb{R}^{m\times T}$, where $T$ is the control time horizon length, which allows sampling feasible trajectories without enforcing equality constraints from the dynamics. We assume the availability of an oracle (such as a simulator) that can be queried with actions to run rollouts to find $\xi_{0:T+1}$. 

We specify a planning task through a cost function that we seek to minimize
\begin{gather*}
    \min_{\xi_{0:T+1}, u_{0:T}} J(\xi_{0:T+1}, u_{0:T}),
\end{gather*}
where $J: \mathbb{R}^{n\times (T+1)}\times \mathbb{R}^{m\times T} \rightarrow \mathbb{R}$ is the total trajectory cost. 
We specify the constraint as an inequality $g(\tau) \geq 0$ where $g(\tau) : \mathbb{R}^{m\times T} \rightarrow \mathbb{R}$. For example, for a collision avoidance problem, this might be a function mapping to the distance to the closest obstacle.  Note that the diffusion process time index, $s \in [0\dots S]$, in $p_s(\tau)$ corresponds iterations of our diffusion process and is different from, $t\in[0\dots T+1]$, the index for the temporal component of our trajectory.

\subsection{Emerging Barriers}
Our proposed solution involves modifying the target distribution  with a time-varying barrier that emerges over the process. Every iteration of the diffusion process, we run a DDPM style update as in equation \eqref{eq:ddpm2}. However, inspired by interior point methods for constrained optimization, we introduce a barrier cost function,
\begin{equation*}
    b(x, s) = \begin{cases}
        -\mu_s \log (g(x) + c_s), & g(x) + c_s \geq 0\\
        \infty, & g(x) + c_s < 0
    \end{cases}\quad,
\end{equation*}
where $\mu$ controls the ``hardness" of the barrier term, $c_s$ is a positive time-varying offset that acts as a constraint relaxation term. We augment the problem cost and target distribution with $b(x, s)$ by defining a time-varying ``target distribution", $\hat p_0(x,s)$, that varies over the diffusion time horizon
\begin{equation*}
    \hat p_0(x,s) = \exp\left(-\hat J(x, s)\right)=  \exp\left(-\frac{1}{\lambda}J(x) - b(x, s) \right),
\end{equation*}
where the barrier term helps approximate $\mathds{1}_g$. Intuitively, when the relaxed constraint $g(x) + c_s \leq 0$, the cost is infinite and thus $\hat p_0(x)=0$. When $c_s > \inf g(x)$, we have no ``dead" samples. This allows us to start off with an unconstrained solution space with tightening constraints guided by the barrier. The final algorithm which we call Emerging Barrier MBD can be seen in Algorithm \ref{barriermbd}.

\begin{figure}[t]
    \centering    \includegraphics[width=0.9\linewidth]{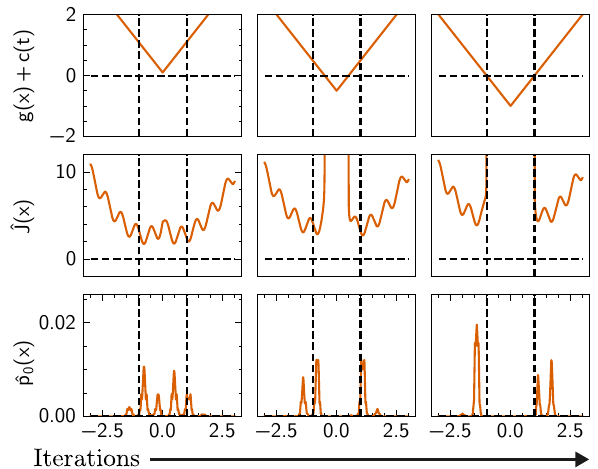}
     \caption{Evolution of relaxed constraint $g(x) + c_s$, the time-varying cost function and the associated normalized density, for the constraint $|x| \leq 1$}
    \label{fig:mountain_plot}
\end{figure}

The barrier cost in the feasible region can be seen as a force that encourages distance away from the constraint boundary and moves with a moving constraint boundary. The emerging barrier can be thought of as complementary to the diffusion itself, which also assigns non-zero probability to zero probability regions in the target distribution by Gaussian smoothing.

For a meaningful emerging barrier schedule, $c_s$ decreases over the reverse process, to enforce constraints as in Figure \ref{fig:mountain_plot}. $\mu_s$ can either decrease to $0$ or be kept at a tuned constant value (e.g. for a obstacle avoidance problem high $\mu_s$ encourages distance from obstacles). At $s=0$, $c_s = 0$ and $\mu_s$ is small (such that $J(x) \gg b(x)$ for $g(x) < 0$). We note that if $\mu, c\rightarrow(0,0)$ as $s\rightarrow0$, the target probability distribution converges pointwise to our true target distribution,
\begin{gather*}
    \lim_{s \rightarrow 0} \left[ \exp\left(\frac{1}{\lambda}J(x) + b(x, s) \right)\right]  = \exp\left(\frac{1}{\lambda}J(x)\right) \mathds{1}_g(x).
\end{gather*}

Additionally, although the cost can become $\infty$, since the equations \eqref{eq:ddpm2} and \eqref{eq:mcscore} directly use the probability density, which is $0$ for constraint-breaking samples, the operations taken every step are still well defined. 

\begin{algorithm}
    \caption{Emerging Barrier MBD}
    \label{barriermbd}
    \KwIn{Noise schedule $\beta_s$, Barrier schedule $(\mu_s, c_s)$, Diffusion steps $S$, Cost function $J(x)$, Constraint $g(x)$, Temperature $\lambda$}
    \tcp{Compute scheduling variables}
    $\alpha_s \gets 1 - \beta_s$ 
    
    $\bar \alpha_s \gets \prod_{i=0}^s \alpha_i$\;

    \tcp{Sample initial diffusion state}
    $x_S \sim \mathcal{N}(0, I)$ 
    
    \For{$s \gets S$ \KwTo $0$}{
        \tcp{Sample around current state}

        $\mathcal{X}_s\sim\mathcal{N}(\frac{x_s}{\sqrt{\bar \alpha_{s-1}}}, \frac{I}{\sqrt{\bar \alpha_{s-1}}}- I)$ 
        
        \tcp{Update barrier parameters}
        
        $\hat p_0(x) := \exp\left(\frac{1}{\lambda}J(x)-\mu_s \log (g(x) +c_s)\right)$
        
        \tcp{Compute score approximation} 
    
        $\gamma \gets - \frac{x_s}{1-\bar \alpha_s} + \frac{\sqrt{\bar\alpha_s}}{1-\bar{\alpha_s}} \left(\frac{\sum_{\hat x_i\in\mathcal{X}_s}\hat x_i\, \hat p_0(\hat x_i)}{\sum_{\hat x_i\in\mathcal{X}_s}\hat p_0(\hat x_i)} \right)$  

        \tcp{Run reverse diffusion step}
        
        $x_{s-1} \gets \frac{1}{\sqrt{\alpha_s}}\left[x_s + (1-\bar{\alpha}_s)\gamma\right]$   
    }
\end{algorithm}
EB-MBD has schedules of the barrier offset $c_s$ and barrier softness $\mu_s$ as hyperparameters of the algorithm. The key challenge in tuning EB-MBD is maintaining ``alive" samples throughout the process despite the constraint tightening over the process.

We may interpret the behavior of emerging barriers as occurring in two regimes which we refer to as the global regime and the local regime. At the beginning, model-based diffusion operates in the global regime, with high sampling noise and relaxed constraints. In this regime, EB-MBD explores many local minima e.g. attempting various modes of reaching the target in Figure \ref{fig:low_constrained}. Towards the end of the diffusion process, EB-MBD is in the local regime, where the sampling noise is small and the iterations closely approximate gradient ascent on the unmodified $p_0(x)$ -- this corresponds to refining a single trajectory in Figure \ref{fig:low_constrained}.
\subsection{Barrier analysis}\label{sec:barrieranalysis}

As $c_0=0$, any schedule that reduces $c_s$ slowly early in the process must speed up proportionally later in the process and vice versa. A constraint that rapidly decreases $c_s$ in the beginning produces a larger number of ``dead" infeasible samples, suffering from lack of gradient information, similar to MBD. A slow progression has higher quality gradient information but towards the end of the process, as $c_s$ is required to converge to $0$ quickly, the process may struggle with dead samples. This can mean infeasible solutions as the iterates cannot keep up with the progression of the barrier and remain dead permanently. This exposes a trade-off in the offset schedule as a design decision.

With some assumptions, we can analyze the worst-case behavior in the local regime of the EB-MBD process to lower-bound the probability of dead samples. 

\begin{assumption} 
The constraint function $g(x)$ is a linear signed distance function $g(x) = w^Tx + b$ with $||\nabla g(x)||=||w||=1$, where $w$ has the same dimensions as $x$ and $b$ is scalar  \label{assumption:linearisation}
\end{assumption}
We justify this as the first order Taylor series of $g(x)$ which is a valid approximation in the limit towards the end of the process, as we will be sampling $g(x_s+\varepsilon)$, where $\varepsilon$ is a perturbation with very small variance. We note that all SDFs are $1$-Lipschitz and almost everywhere $||\nabla g(x)||=1$.
\begin{lemma}
    Suppose Assumption \ref{assumption:linearisation} holds, then a given $x_{s}$, the probability that a random sample of $X_s$ is alive is lower bounded by 
    \begin{gather*}
        \mathbb{P}(X_s \text{ alive})  \geq \Phi\left(\frac{1}{\sigma_s}\left(g(x_{s})  + c_s - \frac{1- \sqrt{\bar \alpha_{s-1}}}{\sqrt{\bar \alpha_{s-1}}}||x_{s}||\right) \right),
    \end{gather*}\label{lemma:Pnextgivenlast}
where $\sigma^2$ is the sampling variance of the MBD process, and $\Phi(\cdot)$ is the univariate Gaussian cumulative distribution function.
\end{lemma}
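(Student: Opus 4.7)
The plan is to exploit the linearity of $g$ under Assumption \ref{assumption:linearisation} to reduce the multivariate Gaussian event $\{g(X_s) + c_s \geq 0\}$ to a univariate Gaussian tail probability, then bound the resulting mean below using Cauchy--Schwarz to remove its dependence on the unknown direction of $x_s$.

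First I would recall that a sample $\hat x$ is \emph{alive} iff $\hat p_0(\hat x) > 0$, which (since the exponential term is always strictly positive) is equivalent to $g(\hat x) + c_s > 0$. Under Assumption \ref{assumption:linearisation}, $g(X_s) + c_s = w^T X_s + b + c_s$ is an affine function of the Gaussian vector $X_s \sim \mathcal{N}\bigl(x_s/\sqrt{\bar\alpha_{s-1}},\,\sigma_s^2 I\bigr)$, hence is itself a scalar Gaussian with mean $w^T x_s/\sqrt{\bar\alpha_{s-1}} + b + c_s$ and variance $\sigma_s^2\|w\|^2 = \sigma_s^2$ (using $\|w\|=1$). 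Standardizing gives directly
\begin{equation*}
\mathbb{P}(X_s\text{ alive}) \;=\; \Phi\!\left(\frac{1}{\sigma_s}\left(\tfrac{w^T x_s}{\sqrt{\bar\alpha_{s-1}}} + b + c_s\right)\right).
\end{equation*}

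Next I would rewrite the argument to expose $g(x_s) = w^T x_s + b$ by splitting
\begin{equation*}
\frac{w^T x_s}{\sqrt{\bar\alpha_{s-1}}} + b \;=\; g(x_s) + \frac{1-\sqrt{\bar\alpha_{s-1}}}{\sqrt{\bar\alpha_{s-1}}}\, w^T x_s,
\end{equation*}
and then apply Cauchy--Schwarz together with $\|w\|=1$ to obtain the worst-case bound $w^T x_s \geq -\|x_s\|$. Since $\bar\alpha_{s-1} \in (0,1]$, the coefficient $(1-\sqrt{\bar\alpha_{s-1}})/\sqrt{\bar\alpha_{s-1}}$ is non-negative, so substituting the bound preserves the inequality direction and yields
\begin{equation*}
\tfrac{w^T x_s}{\sqrt{\bar\alpha_{s-1}}} + b + c_s \;\geq\; g(x_s) + c_s - \frac{1-\sqrt{\bar\alpha_{s-1}}}{\sqrt{\bar\alpha_{s-1}}}\,\|x_s\|.
\end{equation*}
The claim then follows by monotonicity of $\Phi$ and dividing by $\sigma_s > 0$.

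The only subtle step is the Cauchy--Schwarz substitution: it is tight precisely when $x_s$ is anti-aligned with the constraint normal $w$, which is the genuinely worst-case configuration for sample survival and therefore the correct direction for a lower bound. The remaining manipulations are routine properties of affine transformations of Gaussians, so I do not expect meaningful obstacles beyond careful bookkeeping of the noise schedule constants $\bar\alpha_{s-1}$ and $\sigma_s$.
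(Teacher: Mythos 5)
Your proposal is correct and follows essentially the same route as the paper: both reduce the event to a half-space probability for the Gaussian $X_s$ and then replace $g(x_s/\sqrt{\bar\alpha_{s-1}})$ by $g(x_s) - \frac{1-\sqrt{\bar\alpha_{s-1}}}{\sqrt{\bar\alpha_{s-1}}}\|x_s\|$ (the paper phrases this step via the $1$-Lipschitz property of the signed distance function, which is exactly your Cauchy--Schwarz bound $w^T x_s \geq -\|x_s\|$). Your ordering --- compute the exact $\Phi$ expression first, then bound its argument --- is a clean equivalent of the paper's, and your explicit check that the coefficient $(1-\sqrt{\bar\alpha_{s-1}})/\sqrt{\bar\alpha_{s-1}}$ is non-negative is a detail worth keeping.
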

The proof for this lemma is provided in Appendix \ref{appendix:lemma}. A sample $x$ is alive if  $g(x) + c_s \geq 0$, for which we get the probability by
\begin{equation*}
    \mathbb{P}(g(X_s) + c_s \geq 0| x_{s}).
\end{equation*}
 MBD samples from $X_s \sim\mathcal{N}(\frac{x_{s}}{\sqrt{\bar \alpha_{s-1}}}, \sigma_s^2 I)$. We can arrive at Lemma \ref{lemma:Pnextgivenlast} by expanding and applying Assumption \ref{assumption:linearisation}. We can interpret this as the probability increasing with $g(x_s)$ -- which is a distance from constraint boundary for an SDF --, our offset $c_s$, and with increasing sampling variance, $\sigma_s$. 

Lemma \ref{lemma:Pnextgivenlast} provides some insight but it depends critically on the location of $x_{s}$. For an active constraint, the barrier provides a repulsive force that pushes iterates away from the constraint while the objective function does the opposite. This forms a ``boundary layer" where the diffusion state will tend to be attracted. To provide further insight we next assume that $x_s$ is at the local minimum of $J(x,s+1)$ at this boundary layer. 
\begin{assumption}
    $\nabla J(x)$ is $M_J$-Lipschitz. \label{assumption:LsmoothJ}
\end{assumption} 
This is a common assumption in optimization and is met by common costs such as quadratic functions. 
\begin{assumption}
    The local minima, $x^\star$ of $\hat J(x, s)$ is bounded $||x^\star||\leq R$
\end{assumption}
A reasonable bound for the target distribution $X_0$ is often known, e.g. due to actuator limits, and the source distribution's optimum is bounded as $X_S \sim N(0, I_d)$. Thus, we are assuming the diffusion process that interpolates between the two distributions keeps bounded minima.
\begin{corollary}
Assumption \ref{assumption:LsmoothJ} implies that $J(x)$ is $L_J$-Lipschitz continuous within the domain $\{x : ||x|| \leq R\}$,  where $L_J = ||\nabla J(0)|| +M_J R$
\end{corollary}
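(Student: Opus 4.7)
The plan is to derive the corollary from Assumption \ref{assumption:LsmoothJ} in two short steps: first bound $\|\nabla J(x)\|$ uniformly on the ball $\{x : \|x\| \le R\}$, then convert this into a Lipschitz bound on $J$ itself via the mean value inequality (exploiting convexity of the ball).

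First, I would add and subtract $\nabla J(0)$ inside the norm and use the $M_J$-Lipschitz property of $\nabla J$ from Assumption \ref{assumption:LsmoothJ}:
\begin{align*}
\|\nabla J(x)\| &= \|\nabla J(x) - \nabla J(0) + \nabla J(0)\| \\
&\le \|\nabla J(0)\| + \|\nabla J(x) - \nabla J(0)\| \\
&\le \|\nabla J(0)\| + M_J\|x\| \le \|\nabla J(0)\| + M_J R = L_J,
\end{align*}
valid for every $x$ with $\|x\|\le R$. This gives a uniform bound on the gradient on the ball.

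Next, for any $x, y$ with $\|x\|, \|y\| \le R$, the segment $y + t(x-y)$ for $t\in[0,1]$ lies entirely in the ball by convexity, so the gradient bound applies along the whole segment. I would then apply the fundamental theorem of calculus along this segment,
\begin{gather*}
J(x) - J(y) = \int_0^1 \nabla J\bigl(y + t(x-y)\bigr)^{\!\top}(x-y)\,dt,
\end{gather*}
and bound using Cauchy--Schwarz together with the uniform gradient bound:
\begin{gather*}
|J(x) - J(y)| \le \sup_{t\in[0,1]} \|\nabla J(y + t(x-y))\|\cdot\|x-y\| \le L_J\|x-y\|.
\end{gather*}

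There is no real obstacle here — the result is a textbook consequence of gradient-Lipschitzness combined with a domain bound. The only point requiring minor care is noting that the ball $\{x: \|x\|\le R\}$ is convex, so that the line segment between any two interior points stays within the domain where the gradient bound holds; without this the bound would have to be localized. Everything else is a direct calculation and will fit in a few lines.
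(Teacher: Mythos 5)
Your proof is correct and is exactly the standard argument the corollary relies on (the paper states this corollary without an explicit proof): bound $\|\nabla J(x)\|$ on the ball via $\|\nabla J(x)\| \le \|\nabla J(0)\| + M_J\|x\|$, then integrate the gradient along the segment joining two points, using convexity of the ball to keep the segment in the domain. No gaps; nothing further is needed.
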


\begin{assumption}
    $x_s$ is located at the local minimum of the $\hat J(x, s+1)$ where $\nabla J(x^\star_{s+1}) + \nabla b(x^\star_{s+1}, s+1) = 0$. \label{assumption:optimalIter}
\end{assumption}
This can be justified as MBD's optimization process on annealed the target density occurring on a faster time scale than the change in the barrier. The tight tracking of the local minima by MBD is visible experimentally in the original work\cite{pan_model-based_2024}. We refer to \cite{starnes_gaussian_2024} for a convergence analysis of similar algorithms.
\begin{theorem}
 Suppose assumptions \ref{assumption:linearisation}-\ref{assumption:optimalIter} hold, then $\mathbb{P}(X_s \text{ alive})$, the probability that a sample drawn from $X_s$ is alive, is lower bounded by 
 \begin{equation*}
      \Phi\left(\frac{1}{\sigma_s}\left(\frac{\mu_{s+1}}{ L_J} - \frac{1- \sqrt{\bar \alpha_{s-1}}}{\sqrt{\bar \alpha_{s-1}}}R + c_s - c_{s+1}\right) \right).
 \end{equation*}
\label{theorem:finalbound}
\end{theorem}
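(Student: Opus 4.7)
The plan is to start from Lemma \ref{lemma:Pnextgivenlast} and use the remaining assumptions to bound the two data-dependent terms, namely $g(x_s)$ from below and $\|x_s\|$ from above, in a way that eliminates the dependence on the specific iterate.

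First, I would apply Assumption \ref{assumption:optimalIter} to extract structural information about $x_s$. The first-order optimality condition $\nabla J(x_s) + \nabla b(x_s, s+1) = 0$, together with the explicit form
\begin{equation*}
\nabla b(x, s+1) = -\mu_{s+1}\,\frac{\nabla g(x)}{g(x) + c_{s+1}},
\end{equation*}
gives $\nabla J(x_s) = \mu_{s+1}\,\nabla g(x_s)/(g(x_s) + c_{s+1})$. Taking norms and using Assumption \ref{assumption:linearisation} (so $\|\nabla g(x_s)\| = 1$ and $g(x_s) + c_{s+1} > 0$ since $x_s$ lies in the feasible interior of the relaxed constraint), we obtain
\begin{equation*}
g(x_s) + c_{s+1} = \frac{\mu_{s+1}}{\|\nabla J(x_s)\|}.
\end{equation*}

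Next I would bound the denominator. By the corollary to Assumption \ref{assumption:LsmoothJ}, $J$ is $L_J$-Lipschitz on $\{x: \|x\|\leq R\}$, so $\|\nabla J(x_s)\| \leq L_J$. This gives the clean lower bound
\begin{equation*}
g(x_s) \;\geq\; \frac{\mu_{s+1}}{L_J} - c_{s+1}.
\end{equation*}
For the norm term in Lemma \ref{lemma:Pnextgivenlast}, Assumption 3 directly yields $\|x_s\| \leq R$, and since the coefficient $(1 - \sqrt{\bar\alpha_{s-1}})/\sqrt{\bar\alpha_{s-1}}$ is nonnegative, substituting $R$ only weakens the bound in the correct direction inside $\Phi$.

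Finally, I would substitute both bounds into Lemma \ref{lemma:Pnextgivenlast}, using monotonicity of $\Phi$, to conclude
\begin{equation*}
\mathbb{P}(X_s \text{ alive}) \;\geq\; \Phi\!\left(\frac{1}{\sigma_s}\left(\frac{\mu_{s+1}}{L_J} + c_s - c_{s+1} - \frac{1 - \sqrt{\bar\alpha_{s-1}}}{\sqrt{\bar\alpha_{s-1}}}R\right)\right),
\end{equation*}
which is the claimed bound. The main obstacle I anticipate is the sign/feasibility bookkeeping in the optimality step: one must verify that $x_s$ really lies strictly inside the relaxed feasible set (so that $g(x_s)+c_{s+1}>0$ and the barrier gradient is well-defined), and that the Lipschitz bound $L_J$ from the corollary applies at $x_s$, which is where Assumption 3's uniform radius $R$ plays its essential role by putting every iterate into the domain on which the Lipschitz constant is valid.
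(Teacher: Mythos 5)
Your proposal is correct and follows essentially the same route as the paper's proof: apply the first-order optimality condition from Assumption \ref{assumption:optimalIter}, take norms using $\|\nabla g\|=1$ and the Lipschitz bound $\|\nabla J(x^\star_{s+1})\|\leq L_J$ to obtain $g(x^\star_{s+1})\geq \mu_{s+1}/L_J - c_{s+1}$, then substitute into Lemma \ref{lemma:Pnextgivenlast} along with $\|x_s\|\leq R$. Your explicit attention to the feasibility bookkeeping ($g(x_s)+c_{s+1}>0$) and to why $R$ is needed for the Lipschitz constant to apply is a slightly more careful rendering of steps the paper leaves implicit, but the argument is the same.
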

\begin{proof} We can find the location of the minima by solving
\begin{equation*}
    \nabla \hat J(x^\star_{s+1}, s+1) = \nabla J(x^\star_{s+1}) + \nabla b(x^\star_{s+1}, s+1) = 0.
\end{equation*}
By taking the norm, and using Assumption $\ref{assumption:linearisation}$ and \ref{assumption:LsmoothJ}
\begin{equation*}
    \frac{\mu_{s+1}}{g(x^\star_{s+1})+c_{s+1}} = ||\nabla J(x^\star_{s+1})|| \leq L_J.
\end{equation*}
By rearranging, we find
\begin{equation*}
    g(x^\star_{s+1}) \geq \frac{\mu_{s+1}}{L_J} - c_{s+1}.
\end{equation*}
Using assumption \ref{assumption:optimalIter}, we substitute $g(x_s)=g(x^\star_{s+1})$ into Lemma \ref{lemma:Pnextgivenlast}, we arrive at the final statement.
\end{proof}

Based on Theorem \ref{theorem:finalbound}, there is a simple relationship that governs the behavior of solutions in the local regime of EB-MBD. In particular, $\mu$ governs a distance from the constraint boundary. A smaller $\mu$ leads to lower cost solutions but it also leads to reduced probability of sampling live solutions. This can be counteracted by reducing  $(c_s - c_{s+1})$, which is the rate of emergence. We note that the barrier emergence rate in this late stage of the the process can be reduced by \textit{either} performing emergence earlier in the process, or by increasing number of diffusion iterations so that the emergence budget is spread over more iterations. On the other hand, if we desire fewer diffusion iterations for faster runtime, to avoid EB-MBD returning infeasible solutions -- which is possible due to dead samples with too quick barrier emergence (See experimental results in \ref{sec:2dobs}) -- we can increase $\mu$.  

\section{Experimental Results}\label{sec:results}
We implement Emerging Barrier MBD using the JAX Python package and demonstrate results for both a 2D obstacle avoidance environment, and a 3D high DOF underwater mobile manipulator system using the MuJoCo MJX as the underlying simulator\footnote{Code can be found at \color{blue}\iftoggle{anon}{link removed}{\href{https://github.com/acfr/emerging_barrier_mbd}{{https://github.com/acfr/emerging\_barrier\_mbd}}}}. MJX allows simulation rollouts to be parallelized on a GPU. All experiments were conducted on a PC with an i7-13700K, and RTX 3070 GPU and 32 GB of memory.

As the possible form of $c_s$ is a large class of functions, we parametrize $c_s$ with
\begin{equation*}
    c_s = c_{\text{max}} -c_{\text{max}}\left(\frac{s}{S}\right)^{\kappa},
\end{equation*}
to study the trade-off between early and late emergence, where $\kappa$ is a positive parameter controlling the trade-off. At $\kappa=1$, the constraint offset progresses linearly. As the effect of $\mu$ and $\kappa$ on liveliness is coupled, we keep $\mu$ fixed for all experiments to highlight its effects. 

At $\kappa >1$, the offset progresses slowly at first and faster towards the end, and vice versa for $\kappa < 1$. We use the DDPM \cite{ho_denoising_2020} noise schedule of $\beta_1 = 10^{-4}$ and $\beta_T=0.02$ with linear spacing for all experiments. $c_{\text{max}}$ is chosen to be the maximum value $g(x)$ could take on. 

\subsection{2D Obstacle Avoidance}\label{sec:2dobs}
We show the performance of EB-MBD on a simple 2D obstacle avoidance problem. The dynamics and cost are
\begin{gather*}
    \xi_{t+1} = \xi_t + 0.3\,\,\texttt{sigmoid}(0.1||u||)\frac{u_t}{||u_t||}\, ,\\
    J(\tau) = 20 \,||\xi_{T+1}-\xi_r||+\sum_t^T (||\xi_t - \xi_r||+||0.1 u_t||)\,,
\end{gather*}
where $\xi_r$ is a target position, and $\texttt{sigmoid}(x) = \frac{1}{1+e^{-x}}$. $g(\tau)$ is the signed distance to the closest obstacle encountered over the trajectory. MBD struggles due to dead samples. 

Figure \ref{fig:ebd_trajectory_comparison} shows how EB-MBD performs better than than MBD, with every trajectory reaching near the goal, while maintaining diverse solutions. In comparison, none of the MBD solutions reach the target. We notice that there are many constraint violating trajectories for MBD which is where all samples were constraint violating throughout the diffusion process.

\begin{figure}[t]
    \centering
    \subfloat{
        \includegraphics[width=0.20\textwidth]{images/Figure_Failed_Trajectory.pdf}
        \label{fig:failed_trajectory}
    }
    \subfloat{
        \includegraphics[width=0.20\textwidth]{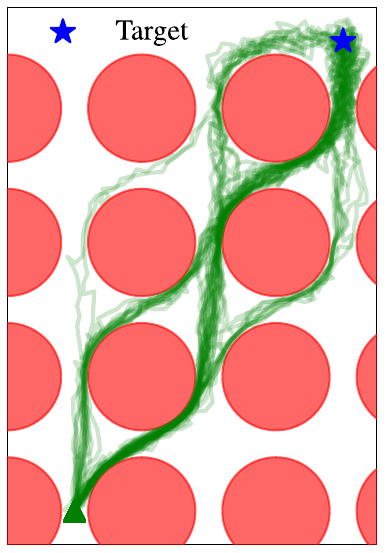}
        \label{fig:successful_trajectory}
    }
    \caption{Left: Planned trajectories are noisy for MBD due to inefficient sampling. Right: EB-MBD successfully generates trajectories from diverse high quality modes, all of which reach the vicinity of the target}
    \label{fig:ebd_trajectory_comparison}
\end{figure}

Lower $\kappa$ values result in local poor minima, similar to MBD. Increasing $\kappa$ generally shows improvement; however, the tightening constraint boundary can overshoot the current solution leading to the diffusion process dying permanently (see $\kappa > 1$ in Figure \ref{fig:constrained_violations}) as the current iterate gets stuck inside the moving constraint boundary, resulting in infeasible solutions. This can be improved by increasing $\mu$ to increase the boundary layer distance as explained in Section\ref{sec:barrieranalysis}.

\begin{figure}[t]
    \centering
    \includegraphics[width=0.9\linewidth]{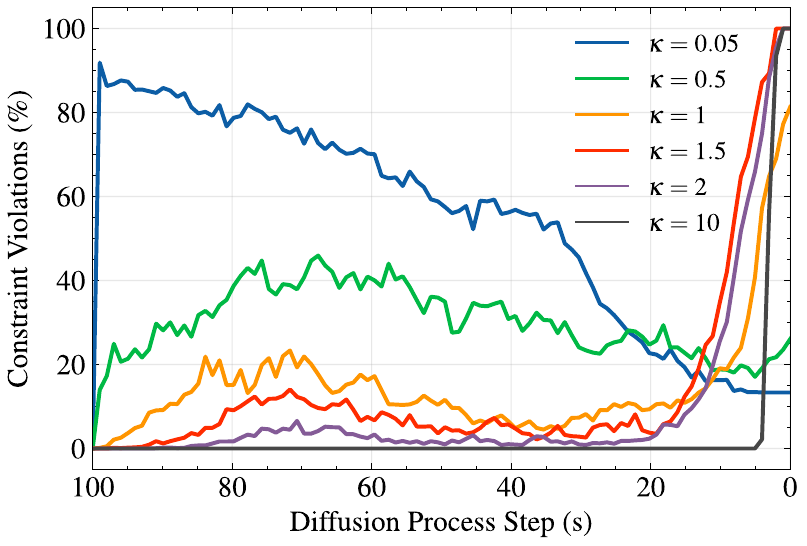}
    \caption{Percentage of samples that violate constraints over diffusion time over various $\kappa$ values. $\kappa$ being too high leads to 100\% constraint violations and infeasible outputs}
    \label{fig:constrained_violations}
\end{figure}

\subsection{Comparison to projection-based methods}
The most commonly proposed method for enforcing constraints on diffusion models and sampling-based trajectory optimization involves projections onto the constraint satisfying set performed at each iteration \cite{rastgar_priest_2024, christopher_constrained_2024, romer_diffusion_2024}. 

A projection of the trajectory $\tau_d$ onto the feasible set defined by $g(\tau)\geq0$ is denoted $\Pi_g(\tau_d)$

\begin{equation*}
    \Pi_g (\tau_d) = \arg \min_{\tau} ||\tau - \tau_d||^2 \quad \text{s.t. } g(\tau) \geq 0.
\end{equation*}
These projections are cast as NLP problems and are not uniquely defined for non-convex sets. The authors of DPCC\cite{romer_diffusion_2024} proposed using iteratively-tightening constraint for a diffusion-based MPC algorithm which is conceptually similar to progressive barriers. However, for MBD, simply ensuring constraints are satisfied after each step is not sufficient as when variance is high, most samples are still infeasible. Additionally, projections have a large computational burden as each iteration requires rollouts and taking derivatives. They also have variable runtime due to the varying convergence time of the optimizer. In comparison,  MBD and EB-MBD have effectively a constant solve time -- although the hyperparameters may need to be tuned for the system.  
\begin{figure}[t]
    \centering
    \includegraphics[width=0.85\linewidth]{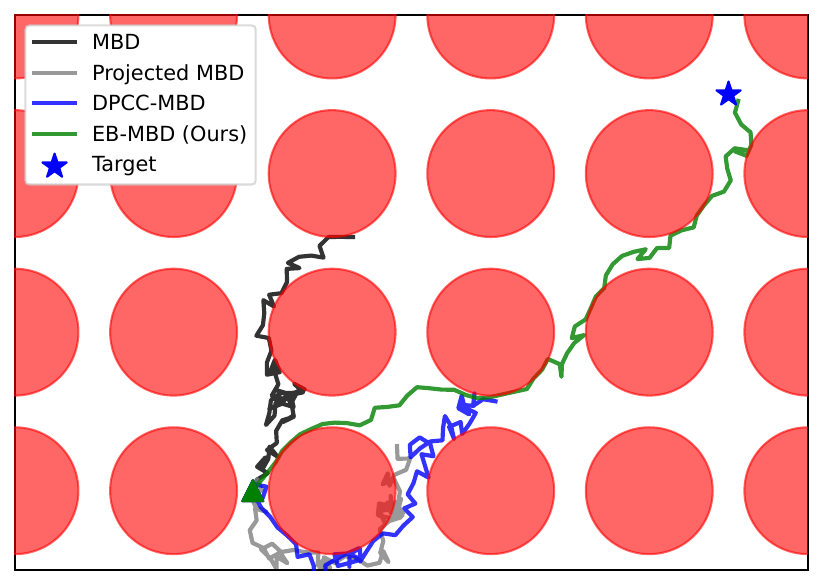}
    \caption{Typical trajectories from MBD, EB-MBD (Ours), Projected MBD\cite{christopher_constrained_2024} and DPCC-MBD\cite{romer_diffusion_2024}. EB-MBD performs substantially better than projection methods and is the only one that reaches the goal}
    \label{fig:proj_comparison}
\end{figure}
We compare against projections to the constraint set and to DPCC-style tightening constraints in Table \ref{tab:2d-perf-comparison}. We show that simply applying projection-based methods to MBD is not successful since the sampling statistics around the feasible iterate after each projection are still poor early on in the process. Projections were implemented through SciPy's SLSQP solver with analytical derivatives provided through JAX's autodifferentiation, and constraint relaxation was done similar to EB-MBD by enforcing $g(\tau)+c_s \geq 0$. 

EB-MBD is able to produce significantly lower cost trajectories and lower terminal distance to the target end point, all while taking orders of magnitude less time. Unlike the projection methods, EB-MBD also does not require taking derivatives of rollouts or the constraint function.

\begin{table}[]
\centering
\begin{tabular}{@{}llll@{}}
\toprule
\textbf{Algorithm}                                & \textbf{Mean Cost}    & \textbf{Mean Final Distance} &\textbf{Runtime (s)} \\\midrule
MBD \cite{pan_model-based_2024}                   &     514.6             &  4.3590                 &    \textbf{0.0382}  \\
Projected MBD \cite{christopher_constrained_2024} &     533.2             &  4.8517                 &    49.6752          \\
DPCC-MBD \cite{romer_diffusion_2024}              &     479.5             &  3.8221                 &    7.3621           \\ 
EB-MBD (Ours)                                     &     \textbf{234.7}    &  \textbf{0.2285}        &    \textbf{0.0383}  \\
\bottomrule \\
\end{tabular}
\caption{Comparison of EB-MBD on the 2D obstacle avoidance over 50 trajectories}
\label{tab:2d-perf-comparison}
\end{table}
\raggedbottom
\subsection{Underwater Vehicle Manipulator System}
We demonstrate EB-MBD on a high-dimensional motion planning problem in simulation for a Underwater Vehicle Manipulator System (UVMS), consisting of a BlueROV Heavy platform with a Reach Alpha manipulator, with $9$ kinematic DOF and an $11$ dimensional action space. The MuJoCo MJX simulator was used as the oracle to roll out actions and the task was to minimize a weighted combination of the quadratic cost associated with distance from the wrist of the manipulator to a target point, the actions, and the orientation of the ROV body. The target position is in a hollow box with an opening, and the constraint is to avoid the box. The cost is
\begin{equation*}
    \ell(\xi) = 10||\xi_w-r_t|| + ||\Lambda u_t||+ ||\text{Im}(q)||, \,\, \ell_T(\xi) = 100\ell(\xi, 0),
\end{equation*}
and $J(\tau)=\ell_T(\xi_{T+1}) + \sum_{t=0}^T \ell(\xi_t, u_t)$, where $\xi_w$ is the wrist position component of $\xi$, $r_t$ is the target wrist position, $\Lambda$ is a diagonal weighting matrix and $q$ is an orientation quaternion. This problem is challenging due to its high dimensionality and the complex motion required; the base is required to move and multiple joints need to coordinate together without the inertia of the robot causing a collision later in the trajectory. Figure \ref{fig:rov_traj} (a) shows how EB-MBD is able to successfully plan for the motion and \ref{fig:rov_traj} (b) shows how MBD trajectories get stuck in a poor local minima when EB-MBD finds early trajectories through the obstacle which get pushed out to better trajectories over iterations. Table \ref{tab:3d-perf-comparison} shows how EB-MBD achieves lower mean cost and higher success rates -- which we define as the percentage of time the end effector was \textit{inside} the box without collision. We were unable to run projection methods in a reasonable amount of computational time on the UVMS due to increase in complexity. We emphasize the scalability of EB-MBD as computational time only grows with the extra evaluations of $g(x)$, as opposed to the complexity of the non-linear program which often grows more rapidly for complex systems. 

\begin{figure}[t]
    \centering
        \includegraphics[width=0.9\linewidth]{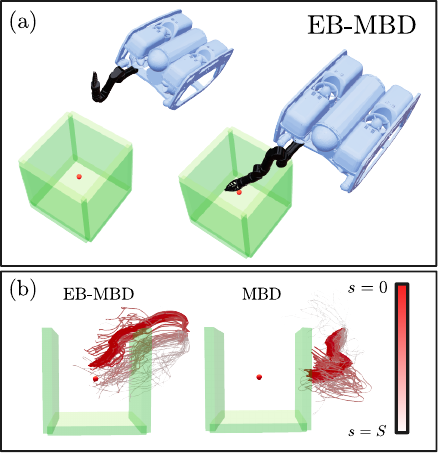}
    \caption{(a) With EB-MBD, the UVMS reaches the target point while avoiding the obstacle. (b) Intermediate end effector trajectories over a single inference of EB-MBD and MBD, with curves increasing in linewidth and redness towards the end of the process.}
    \label{fig:rov_traj}
\end{figure}

\begin{table}
\centering
\begin{tabular}{@{}llll@{}}
\toprule
\textbf{Algorithm}    & \textbf{Mean Cost} & \textbf{Success (\%)} & \textbf{Runtime (s)} \\ \midrule
MBD \cite{pan_model-based_2024}        &  427.25    &   22         &  \textbf{50}                         \\
EB-MBD (Ours)         & \textbf{362.94}         &   \textbf{48}    &  53                         \\
\bottomrule
\end{tabular}
\caption{Comparison of EB-MBD on 3D UVMS system over 50 trajectories}
\label{tab:3d-perf-comparison}
\end{table}

\section{Conclusion}
We addressed the performance degradation of model-based diffusion (MBD) for motion planning in highly constrained environments, a problem arising from poor score estimation when infeasible regions cover a large part of the solution space. We proposed Emerging-Barrier MBD, which applies an interior point-inspired time-varying barrier function, to guide solutions. This approach avoids catastrophic performance degradation and significantly improves solution quality for highly constrained problems. Our method was demonstrated on robotics collision avoidance problems, where it maintained good sample complexity and solution diversity. We analyzed the barrier schedule and its effect on solution liveliness statistics. We compare against projection-based constrained diffusion methods and show substantially better performance at orders of magnitude faster computational time.

While EB-MBD exhibited strong performance in our experiments, it does have some limitations. EB-MBD does not guarantee good solutions and good performance relies on barrier schedules well tuned for the problem. If a poor schedule is chosen, the output solutions can be entirely infeasible. Our analysis also assumes timescale separation of the diffusion process local convergence and the barrier emergence. Future work may involve adaptive barrier schedules that prevent permanently dead samples without additional barrier tuning, and further analysis of EB-MBD under milder assumptions.   
\appendix
\subsection{Half-space Integral of a Gaussian}
\label{sec:halfspace_integral}
For a Gaussian distribution with density $f_X(\cdot)$,
\begin{equation*}
    f_X(x; \mu, \Sigma) = \frac{1}{(2\pi)^{\frac{n}{2}} |\Sigma|^{\frac{1}{2}}} \exp\left( -\frac12 (x - \mu)^T \Sigma^{-1} (x - \mu)\right),
\end{equation*}
where $x, \mu \in \mathbb{R}^n, \Sigma \in \mathbb{R}^{n\times n}$. 

If $\Sigma = \sigma^2 I$, with $\sigma^2\in\mathbb{R}$, then the integral over a half-space,
\begin{equation*}
    \Omega = \{x \in \mathbb{R}^n | x^T w + b \geq 0\},
\end{equation*} can be found to be 
\begin{equation*}
    P = \int_\Omega f_X(x; \mu, \Sigma) \, dx 
    = \Phi\left(\frac{1}{\sigma ||w||}(\mu^T w + b)\right),
\end{equation*}
where $\Phi(x)$ is the 1D Gaussian CDF.

\subsection{Proof of Lemma \ref{lemma:Pnextgivenlast}} \label{appendix:lemma}
\begin{proof}
A sample is alive if the relaxed constraint $(g(x) + c_s \geq 0)$ is met. MBD samples from $X_s \sim\mathcal{N}(\frac{x_{s}}{\sqrt{\bar \alpha_{s-1}}}, \sigma_s^2 I)$. If $x_{s}$ is known, and $\varepsilon$ refers to the zero mean perturbation with the same variance as $X_s$, then we find
\begin{align*}
    & \mathbb{P}(g(X_s) + c_s \geq 0| x_{s}) \\
    = \, & \mathbb{P}\left(g\left(\frac{x_{s}}{\sqrt{\bar \alpha_{s-1}}} + \varepsilon\right) + c_s \geq 0 \, \middle| x_{s} \right) \\
    = \, & \mathbb{P}\left(g\left(\frac{x_{s}}{\sqrt{\bar \alpha_{s-1}}}\right)  - \left(\nabla g\left(\frac{x_{s}}{\sqrt{\bar \alpha_{s-1}}}\right)\right)^T\varepsilon + c_s \geq 0  \, \middle| x_{s} \right) ,
\end{align*}
where we make make use of Assumption \ref{assumption:linearisation}. Using Lipschitz continuity of $g(x)$ and using Appendix \ref{sec:halfspace_integral} to write in terms of one dimensional Gaussian cumulative distribution function,
\begin{align*}
    \geq& \, \mathbb{P} \left(g(x_{s}) - \frac{1- \sqrt{\bar \alpha_{s-1}}}{\sqrt{\bar \alpha_{s-1}}}||x_{s}|| \right. - 
    \\ &\qquad\qquad\qquad \left. \nabla g\left(\frac{x_{s}}{\sqrt{\bar \alpha_{s-1}}}\right)^T\varepsilon +  c_s \geq 0 \middle| x_{s} \right) \\
    = &\, \Phi\left(\frac{1}{\sigma_s}\left(g(x_{s}) - \frac{1 - \sqrt{\bar \alpha_{s-1}}}{\sqrt{\bar \alpha_{s-1}}}||x_{s}||  + c_s\right) \right).
\end{align*}
\end{proof}

\bibliography{references}  

\end{document}